\newcommand{\crl}[1]{\left\{#1\right\}}
\newtheorem{proposition}{Proposition}
\theoremstyle{definition}
\newtheorem{definition}{Definition}
\newtheorem*{problem}{Problem}
\theoremstyle{remark}
\newcommand{\bfF}{\mathbf{F}}
\newcommand{\bfG}{\mathbf{G}}
\newcommand{\bfU}{\mathbf{U}}
\newcommand{\bfX}{\mathbf{X}}
\newcommand{\bbN}{\mathbb{N}}
\newcommand{\bbR}{\mathbb{R}}
\newcommand{\calA}{\mathcal{A}}
\newcommand{\calE}{\mathcal{E}}
\newcommand{\calF}{\mathcal{F}}
\newcommand{\calG}{\mathcal{G}}
\newcommand{\calM}{\mathcal{M}}
\newcommand{\calP}{\mathcal{P}}
\newcommand{\calQ}{\mathcal{Q}}
\newcommand{\calR}{\mathcal{R}}
\newcommand{\calT}{\mathcal{T}}
\newcommand{\calU}{\mathcal{U}}
\newcommand{\calV}{\mathcal{V}}
\newcommand{\calX}{\mathcal{X}}
\newcommand{\AP}{\calA\!\calP}
\title{\LARGE \bf Optimal Scene Graph Planning with Large Language Model Guidance}
\author{\authorblockN{Zhirui Dai$^{1}$ \qquad Arash Asgharivaskasi$^{1}$ \qquad Thai Duong$^{1}$ \qquad Shusen Lin$^{1}$ \qquad Maria-Elizabeth Tzes$^{2}$} \\[-2ex] \authorblockN{George Pappas$^{2}$ \qquad Nikolay Atanasov$^{1}$}
\thanks{We acknowledge support from ARL DCIST CRA W911NF-17-2-0181.}%
\thanks{$^{1}$The authors are with the Department of Electrical and Computer Engineering, University of California San Diego, La Jolla, CA 92093, USA, e-mails: {\tt\small \{zhdai,\allowbreak shl090,\allowbreak tduong,\allowbreak aasghari,\allowbreak natanasov\}@ucsd.edu}.}%
\thanks{$^{2}$The authors are with GRASP Lab, University of Pennsylvania, Philadelphia, PA 19104, USA, e-mails: {\tt\small \{mtzes,\allowbreak pappasg\}\allowbreak@seas.upenn.edu}.}%
}
\begin{document}
\maketitle
\thispagestyle{empty}  
\pagestyle{empty}      

\begin{abstract}
Recent advances in metric, semantic, and topological mapping have equipped autonomous robots with semantic concept grounding capabilities to interpret natural language tasks. This work aims to leverage these new capabilities with an efficient task planning algorithm for hierarchical metric-semantic models. We consider a scene graph representation of the environment and utilize a large language model (LLM) to convert a natural language task into a linear temporal logic (LTL) automaton. Our main contribution is to enable optimal hierarchical LTL planning with LLM guidance over scene graphs. To achieve efficiency, we construct a hierarchical planning domain that captures the attributes and connectivity of the scene graph and the task automaton, and provide semantic guidance via an LLM heuristic function. To guarantee optimality, we design an LTL heuristic function that is provably consistent and supplements the potentially inadmissible LLM guidance in multi-heuristic planning. We demonstrate efficient planning of complex natural language tasks in scene graphs of virtualized real environments.
\end{abstract}


\section{INTRODUCTION}
\label{sec:introduction}

Advances in robot perception and computer vision have enabled metric-semantic mapping \cite{semantic_fusion,Bowman_SemanticSLAM_ICRA17,dong2017semantic,Shan_OrcVIO_IROS20,kimera,Zobeidi_GPMapping_TRO22,Asgharivaskasi_ActiveMulticlassMapping_TRO23,NeuralBKI,scenecode}, offering rich information in support of robot autonomy. Beyond single-level maps, hierarchical models encode topological relations among local maps and semantic elements \cite{hierarchical_slam,armeni20193d}. A scene graph \cite{armeni20193d} is a prominent example that models buildings, floors, rooms, objects, and occupancy in a unified hierarchical representation. Scene graph construction can be done from streaming sensor data \cite{rosinol20203d,kimera,hughes2022hydra}. The metric, semantic, and topological elements of such models offer the building blocks for robots to execute semantic tasks \cite{crespo2020semanticsurvey}. 
The objective of this work is to approach this challenge by generalizing goal-directed motion planning in flat geometric maps to natural language task planning in scene graphs.

Connecting the concepts in a natural language task to the real-world objects they refer to is a challenging problem, known as symbol grounding \cite{tellex2011approaching}. Large language models (LLMs), such as GPT-3 \cite{brown2020language}, BERT \cite{devlin2018bert}, and LLaMA \cite{touvron2023llama}, offer a possible resolution with their ability to relate entities in an environment model to concepts in natural language. Chen et al. \cite{chen2022extracting,chen2022leveraging} use LLMs for scene graph labeling, showing their capability of high-level understanding of indoor scenes. Shah et al. \cite{shah2022lm} use GPT-3 to parse text instructions to landmarks and the contrastive language image pre-training (CLIP) model \cite{clip} to infer a joint landmark-image distribution for visual navigation. Seminal papers \cite{symbolic_planning,ltl_planning} in the early 2000s established formal logics and automata as powerful representations of robot tasks. In work closely related to ours, Chen et al. \cite{chen2023autotamp} show that natural language tasks in 2D maps encoded as sets of landmarks can be converted to signal temporal logic (STL) \cite{stl} via LLM re-prompting and automatic syntax correction, enabling the use of existing temporal logic planners \cite{tulip,vasile2013samplingbased,fu2016optimal,purohit2021dt,stylus}. Beyond temporal logics, other expressive robot task representations include the planning domain definition language \cite{PDDL}, Petri nets \cite{jongwook1995,LV20229}, and process algebra \cite{uav_process_algebra}. However, few of the existing works have considered task specification in hierarchical models. We use an LLM to translate a natural language task defined by the attributes of a scene graph to a linear temporal logic (LTL) formula. We describe the scene graph to the LLM via an attribute hierarchy and perform co-safe syntax checking to ensure generation of correct and finite-path verifiable LTL. Further, we develop an approach to obtain task execution guidance from the LLM, which is used to accelerate the downstream task planning algorithm.

\begin{figure*}[t]
    \centering
    \includegraphics[width=0.94\linewidth, trim=0mm 2mm 0mm 0mm,clip]{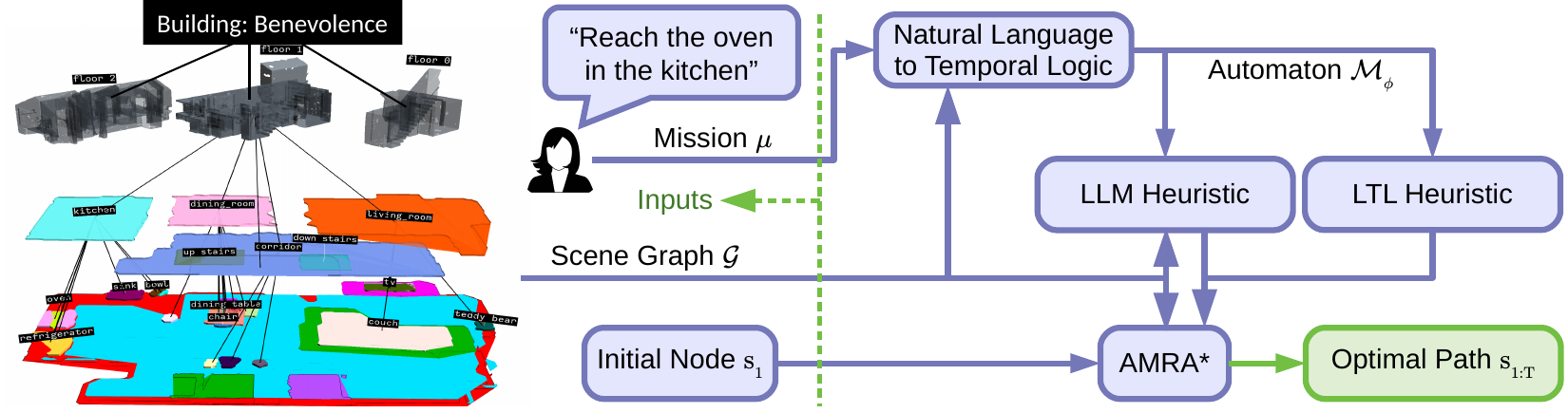}
    \caption{Planning a natural language mission, $\mu: \text{``Reach the oven in the kitchen''}$, in a scene graph $\calG$ of the Gibson environment Benevolence \cite{xiazamirhe2018gibsonenv} with object, room, and floor attributes. The terms ``oven'' and ``kitchen'' in $\mu$ belong to the object and room attributes of the scene graph, respectively. The scene graph $\calG$ is described to the LLM using the connectivity of its attributes (attribute hierarchy) and the LLM is used to translate $\mu$ to LTL formula $\phi_{\mu}$ and associated Automaton $\calM_{\phi}$. We construct a hierarchical planning domain from the scene graph, and use multi-resolution multi-heuristic planning \cite{saxena2022amra} to plan the mission execution. In addition to mission translation, the LLM is used to provide heuristic guidance to accelerate the planning, while an LTL heuristic is used to guarantees optimality.}
\label{fig:example_mission}
\end{figure*}

Given a symbolically grounded task representation, the next challenge is to plan its execution in a hierarchical model efficiently. A key component for achieving efficiency in traditional goal-oriented planning in single-level environments is the use of guidance from a heuristic function. Heuristics play a critical role in accelerating both search-based \cite{astar,aine2016multi} and sampling-based \cite{informed_rrt,bit_star} planners. More complex tasks than goal reaching involve sequencing, branching, and recurrence, making heuristic guidance even more important for efficiency. Our work is inspired by Fu et al.~\cite{fu2016optimal} who develop an admissible heuristic for LTL planning in probabilistic landmark maps. We extend the approach by (a) generalizing to hierarchical scene graphs via multi-resolution planning, (b) designing a consistent LTL heuristic allowing acceleration over admissible-only heuristic planning, and (c) introducing an LLM heuristic allowing acceleration from LLM semantic guidance while retaining \emph{optimality guarantees} via multi-heuristic planning. Our approach is enabled by the anytime multi-resolution multi-heuristic A* (AMRA*) \cite{saxena2022amra}, which combines the advantages of multi-resolution A* \cite{du2020multi} and multi-heuristic A* \cite{aine2016multi}. Our key contribution is to define the nodes, edges, and costs of a hierarchical planning domain from a scene graph and to introduce guidance from a consistent LTL heuristic and a semantically informed LLM heuristic.

Related works consider object search and semantic goal navigation in unknown environments, represented as semantic occupancy maps \cite{shah2023lm}, topological maps \cite{kostavelis2016robot}, or scene graphs \cite{amiri2022reasoning,ravichandran2022hierarchical}. Shah et al. \cite{shah2023lm} develop an exploration approach that uses an LLM to score subgoal candidates and provide an exploration heuristic. Kostavelis et al. \cite{kostavelis2016robot} perform place recognition using spatial and temporal proximity to obtain a topological map and encode the connectivity of its attributes in a navigation graph to enable Dijkstra planning to semantically specified goals. Amiri et al. \cite{amiri2022reasoning} employ a scene graph generation network to construct a scene graph from images and a partially observable Markov decision process to obtain an object-search policy. Ravichandran et al. \cite{ravichandran2022hierarchical} embed partial scene graph observations in feature space using a graph neural network (GNN) and train value and policy GNNs via proximal policy optimization. In contrast with these works, we consider significantly more general missions but perform planning in a known scene graph. 

In summary, this paper makes the following \emph{contributions}.
\begin{itemize}
    
    \item We use an LLM to translate natural language to LTL tasks over the attributes of a scene graph.
    
    \item We construct a hierarchical planning domain capturing the structure of the scene graph and LTL task.
    
    \item We design new LTL and LLM heuristic functions for planning, and prove that the LTL heuristic is consistent.

    \item We employ hierarchical multi-heuristic planning to guarantee efficiency (due to LLM semantic guidance) and optimality (due to LTL consistent guidance), despite potential inadmissibility of the LLM heuristic.
\end{itemize}

\section{PROBLEM STATEMENT}
\label{sec:problem_statement}

Consider an agent planning a navigation mission specified in terms of semantic concepts, such as objects and regions, in a known environment. We assume that the environment is represented as a scene graph \cite{armeni20193d}.

\begin{definition}
A \emph{scene graph} is a graph $\calG = (\calV,\calE, \{\calA_k\}_{k=1}^K)$ with node set $\calV$, edge set $\calE \subseteq \calV \times \calV$, and $K$ attribute sets $\calA_k$ for $k = 1,\ldots,K$. Each attribute $a \in \calA_k$ is associated with a subset $\calV_a$ of the nodes $\calV$.
\end{definition}

A scene graph provides a hierarchical semantic description of an environment, such as a building, in terms of floors, rooms, objects, and occupancy (see Fig.~\ref{fig:example_mission}). For example, the graph nodes $\calV$ may represent free space, while the objects may be encoded as an attribute set $\calA_{1}$ such that an object $o \in \calA_1$ is associated with a free region $\calV_o \subset \calV$ around it. Similarly, rooms may be represented as a set $\calA_2$ such that a room $r \in \calA_2$ is associated with a free-space region $\calV_r \subset \calV$. 

Given the initial agent location $s \in \calV$, and a cost function $c : \calE \mapsto \bbR_{>0}$, our objective is to plan a sequence of scene graph nodes (path) that achieves a mission $\mu$ with minimal cost. We assume $\mu$ is provided in natural language using terms from the attribute sets $\calA_k$ of the scene graph. An example scene graph mission is provided in Fig.~\ref{fig:example_mission}.

To interpret a natural language mission, we define atomic propositions whose combinations can capture the mission requirements. An \emph{atomic proposition} $p_a: \calV \mapsto \{\mathsf{false},\mathsf{true}\}$ associated with attribute $a \in \calA_k$ of the scene graph $\calG$ evaluates true at node $s \in \calV$ if $s$ belongs to the nodes $\calV_a$ that satisfy attribute $a$. We denote this by $ p_a(s) : s \in \calV_a$.
The set of all atomic propositions at $s \in \calV$ is denoted by:
\begin{equation}\label{eq:atomic_propositions}
    \small
    \AP(s) \coloneqq \crl{ p_a(s) \mid a \in \calA_k, k = 1,\ldots,K}.
\end{equation}
Intuitively, $p_a(s)$ being $\mathsf{true}$ means that the agent is at node $s$ that satisfies attribute $a$, e.g., reaching an object in $\calA_1$ or being in a room in $\calA_2$. Avoiding an object or leaving a room can be specified via the negation of such propositions. To determine which atomic propositions are satisfied at a particular node, we define a labeling function. 

\begin{definition}
Consider a scene graph $\calG$ with atomic propositions $\AP = \cup_{s \in \calV} \AP(s)$. A \emph{labeling function} $\ell : \calV \mapsto 2^{\AP}$ maps a node $s \in \calV$ to a set $\ell(s) \subseteq \AP$ of atomic propositions that evaluate $\mathsf{true}$ at $s$.
\end{definition}

The labels along a path $s_{1:T}$ are obtained as $\ell(s_{1:T}) = \ell(s_1)\ell(s_2)\ldots\ell(s_T)$ and are called a \emph{word}. A word contains information about the objects, rooms, and floors that an agent visits along its path in a scene graph. We can decide whether the agent path satisfies a mission $\mu$ by interpreting its word. We denote a word $\ell(s_{1:T})$ that satisfies a mission $\mu$ by $\ell(s_{1:T}) \models \mu$, and define the precise semantics of this notation in Sec.~\ref{sec:nl_to_ltl}. With this, we are ready to state the problem of natural language mission planning in scene graphs.

\begin{problem}
Given a scene graph $\calG = (\calV,\calE, \{\calA_k\}_{k=1}^K)$, natural language mission $\mu$ over the attributes of $\calG$, cost function $c : \calE \mapsto \bbR_{>0}$, and initial node $s_1 \in \calV$, plan a path $s_{1:T}$ that satisfies $\mu$ with minimal cost:
\begin{equation}
\begin{aligned}
    \min_{T \in \bbN, s_{1:T} \in \calV^T } & \sum_{t=1}^{T-1} c(s_t,s_{t+1})\\
    \text{s.t.} \quad & (s_t,s_{t+1}) \in \calE, \quad t = 1,\ldots,T-1,\\
    &\ell(s_{1:T}) \models \mu.
\end{aligned} \label{eq:problem_statement}
\end{equation}
\end{problem}

\section{NATURAL LANGUAGE TO TEMPORAL LOGIC}
\label{sec:nl_to_ltl}

We use an LLM to translate natural language missions to logic formulas over the scene graph propositions $\AP$. The key challenge is to describe the structure of a scene graph $\calG$ to an LLM and ask it to translate a mission $\mu$ to a logic formula $\phi_{\mu}$. We focus on linear temporal logic (LTL) \cite{pnueli1981temporal} with syntax in Table.~\ref{tbl:ltl_gram} due to its popularity and sufficient expressiveness to capture temporal ordering of propositions.
 
We require a syntactically co-safe formula \cite{kupferman2001model} to allow checking its satisfaction over finite agent paths. A co-safe LTL formula can be satisfied by a word $\ell(s_{1:T})$ that consists of a finite prefix followed by a (potentially infinite) continuation that does not affect the formula's truth value. LTL formulas that contain only $\bfX$ and $\bfU$ temporal operators when written in negated normal form ($\neg$ appears only in front of atomic propositions) are syntactically co-safe \cite{kupferman2001model}.

To use an LLM for scene understanding, it is necessary to design a prompt that describes the scene graph $\calG$ succinctly. Otherwise, the input might exceed the model token limit or confuse the model about the relationship between the sentences. For this aim, we simplify the scene graph $\calG$ into an \emph{attribute hierarchy} $\Bar{\calG}$ that compactly represents scene entities in a YAML format. In our setup, the top level of $\Bar{\calG}$ contains floors $f \in \calA_3$. The rooms $r_f$ on floor $f$ are defined as $\crl{r \in \calA_2 | \calV_r \subseteq \calV_f}$, and nested as children of floor $f$. Additionally, each room $r$ stores connections to other rooms on the same floor. Similarly, the objects in room $r$, $\crl{o \in \calA_1 | \calV_o \subseteq \calV_r}$, are stored as children of room $r$. Each entity in $\Bar{\calG}$ is tagged with a unique ID to differentiate rooms and objects with the same name. See Fig.~\ref{fig:nl_to_automaton}a for an example attribute hierarchy. In our examples, we define attributes for floors, rooms, and objects and the attribute hierarchy contains $3$ levels but this can be extended to more generalized attribute sets (e.g., level for object affordances). The attribute hierarchy removes the dense node and edge descriptions in $\calG$ which are redundant for mission translation, leading to a significant reduction in prompt size.

Given the natural language mission $\mu$ and the attribute hierarchy $\Bar{\calG}$, the first call to the LLM is only responsible to extract unique IDs from the context of $\mu$, outputting $\mu_{\text{unique}}$. This step facilitates LTL translations by separating high-level scene understanding from accurate LTL generation. Specifically, this step links contextually rich specifications, which can be potentially difficult to be parsed, to unequivocal mission descriptions that are void of confusion for the LLM when it comes to LTL translation (Fig.~\ref{fig:nl_to_automaton}b). The list of entities involved in the mission are extracted from $\mu_{\text{unique}}$ using regular expression, and stored as $\mu_{\text{regex}}$. This allows to inform the LLM about the essential parts of the mission $\mu_{\text{unique}}$, without providing $\Bar{\calG}$. The savings in prompt size are used to augment the prompt with natural language to LTL translation examples expressed in \emph{pre-fix} notation. For instance, $\phi \wedge \varphi$ is expressed as $\wedge \phi \varphi$ in pre-fix format. This circumvents the issue of balancing parenthesis over formula $\phi_{\mu}$. Ultimately, the LTL translation prompt includes $\mu_{\text{unique}}$, $\mu_{\text{regex}}$, and the translation examples (Fig.~\ref{fig:nl_to_automaton}c).

\begin{table}[t]
\caption{\footnotesize Grammar for LTL formulas $\phi$ and $\varphi$.}
\scriptsize
\label{tbl:ltl_gram}
\centering
\begin{tabular}{cccccc}
\hline
\multicolumn{6}{c}{Syntax (Semantics)}\\
\hline
$p_a$ & (Atomic Proposition) & $\phi \vee \varphi$ & (Or) & $\phi \bfU \varphi$ & (Until) \\
$\neg \phi$ & (Negation) & $\phi \Rightarrow \varphi$ & (Imply) & $\bfF \phi$ & (Eventually) \\
$\phi \wedge \varphi$ & (And) & $\bfX \phi$ & (Next) & $\bfG \phi$ & (Always) \\
\hline
\end{tabular}
\end{table}

The translated LTL formula $\phi_{\mu}$ is verified for syntactic correctness and co-safety using an LTL syntax checker. Further calls of the LLM are made to correct $\phi_{\mu}$ if it does not pass the checks, up to a maximum number of allowed verification steps, after which human feedback is used to rephrase the natural language specification $\mu$ (Fig.~\ref{fig:nl_to_automaton}d). Once the mission $\mu$ is successfully translated into a co-safe LTL formula $\phi_{\mu}$, we can evaluate whether an agent path $s_{1:T}$ and its corresponding word $\ell(s_{1:T})$ satisfy $\phi_{\mu}$ by constructing an automaton representation of $\phi_{\mu}$ (Fig.~\ref{fig:nl_to_automaton}e).

\begin{definition}
A deterministic automaton over atomic propositions $\AP$ is a tuple $\calM = (\calQ, 2^{\AP}, T, \calF, q_1)$, where $\calQ$ is a set of states, $2^{\AP}$ is the power set of $\AP$ called alphabet, $T: \calQ \times 2^{\AP} \mapsto \calQ$ is a transition function that specifies the next state $T(q,l)$ from state $q \in \calQ$ under label $l \in 2^{\AP}$, $\calF \subseteq \calQ$ is a set of final states, and $q_1 \in \calQ$ is an initial state.
\end{definition}

A co-safe LTL formula $\phi$ can be translated into a deterministic automaton $\calM_{\phi_{\mu}}$ using model checking tools such as PRISM \cite{prism,klein2018advances} or Spot \cite{spot}. The automaton checks whether a word satisfies $\phi_{\mu}$. A word $\ell_{1:T}$ is accepted by $\calM_{\phi_{\mu}}$, i.e., $\ell_{1:T} \models \phi_{\mu}$, if and only if the state $q_{T+1}$ obtained after transitions $q_{t+1} = T(q_t,\ell_t)$ for $t = 1,\ldots,T$ is contained in the final states $\calF$. Hence, a path $s_{1:T}$ satisfies a co-safe LTL formula $\phi_{\mu}$ if and only if its word $\ell(s_{1:T})$ contains a prefix $\ell(s_{1:t})$ that is accepted by $\calM_{\phi_{\mu}}$. 

\begin{figure}[t]
    \centering
    \includegraphics[width=0.95\linewidth,trim=0mm 2mm 0mm 2mm,clip]{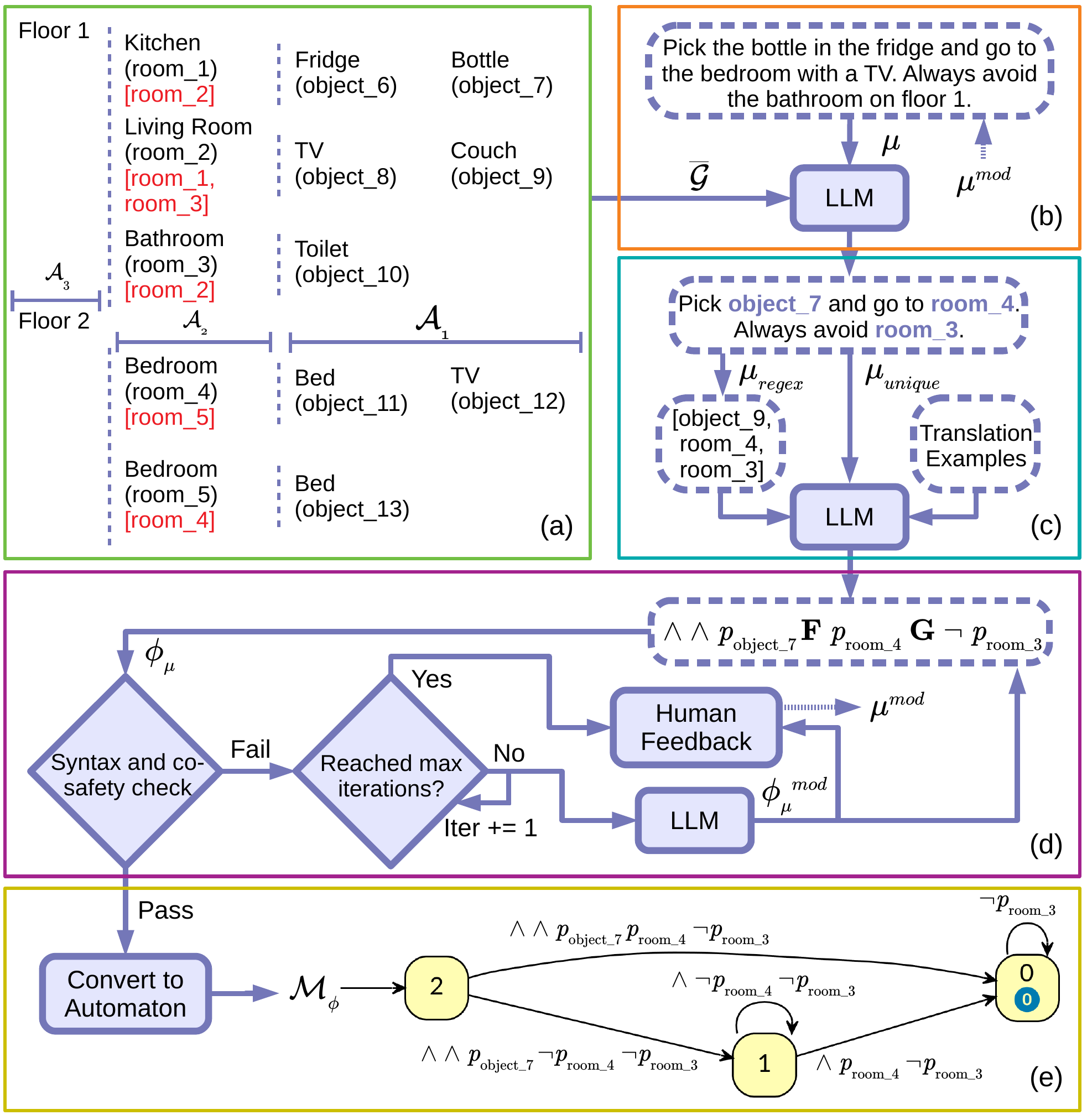}
    \caption{Natural language to LTL translation. (a) Attribute hierarchy $\Bar{\calG}$. The unique IDs and the room connections are shown in parenthesis and inside red brackets, respectively. (b) Unique ID extraction from natural language mission $\mu$. (c) LTL formula generation from natural language specification. (d) Syntax and co-safety check over the generated LTL formula $\phi_{\mu}$. (e) Automaton construction.}
    \label{fig:nl_to_automaton}
\end{figure}

\section{OPTIMAL SCENE GRAPH PLANNING}
\label{sec:planning}

We use the structure of the scene graph $\calG$ with guidance from the automaton $\calM_{\phi_{\mu}}$ and the LLM's mission semantics understanding to achieve efficient and optimal planning.

\subsection{AMRA* Planning} \label{sec:amra_planning}

We perform mission planning using AMRA* \cite{saxena2022amra}. The key challenge is to define a hierarchical planning domain and heuristic functions that describe the scene graph and mission. AMRA* requires several node sets $\calX_r$ and action sets $\calU_r$ for different planning resolution levels, 
$r = 1,2,\ldots$. Each level $(\calX_r,\calU_r)$ has an associated cost function $c_r:\calX_r\times \calX_r \mapsto \bbR_{>0}$. The algorithm defines an anchor level $(\calX_0,\calU_0)$, as $\calX_0 := \cup_{r >0} \calX_r$, $\calU_0 := \cup_{r > 0}\calU_r$, and requires an initial state $x \in \calX_0$ and a goal region $\calR \subseteq \calX_0$. AMRA* allows multiple heuristics for each level but requires the anchor-level heuristics to be consistent to guarantee optimality. A heuristic $h$ is consistent with respect to cost $c$ if $h(x) \leq c(x,x') + h(x')$. We construct the levels $\{\calX_r,\calU_r, c_r\}$, initial state $x$, and goal region $\calR$ required for running AMRA* from the scene graph $\calG$ and the automaton $\calM_{\phi_{\mu}}$ in Sec.~\ref{sec:planning_domain}. We define two heuristics, $h_{\textsc{LTL}}$, which is used in the anchor level and other levels, and $h_{\textsc{LLM}}$ for other levels, in Sec.~\ref{sec:ltl_heuristics} and Sec.~\ref{sec:llm_heuristics}, respectively, and prove that $h_{\textsc{LTL}}$ is consistent.

\subsection{Hierarchical Planning Domain Description} \label{sec:planning_domain}

Given scene graph $\calG$ with initial node $s_1 \in \calV$ and automaton $\calM_{\phi_{\mu}}$, we construct a hierarchical planning domain with $K$ levels corresponding to each scene graph attribute $\calA_k$. Given an attribute set $\calA_k$, we define $\calV_k := \cup_{a \in \calA_k} \calV_a$, where $\calV_a$ is the node set associated with attribute $a \in \calA_k$. Then, the node set corresponding to level $k$ of the planning domain is defined as $\calX_k := \calV_k \times \calQ$. We define the actions $\calU_k$ as transitions from $x_i$ to $x_j$ in $\calX_k$ with associated cost $c_k(x_i,x_j)$. A transition from $x_i=(s_i, q_i)$ and $x_j=(s_j, q_j)$ exists if the following conditions are satisfied:
\begin{enumerate}
    \item the transition is from $\calV_a$ of attribute $a$ to the boundary $\partial \calV_b$ of attribute $b$ such that $s_i \in \calV_a$, $s_j \in \partial \calV_b$ for $a \neq b$ with $a,b \in \calA_k$ and $\text{int}\calV_a \cap \text{int}\calV_b = \emptyset$,
    
    \item the automaton transitions are respected: $q_j = T(q_i, \ell(s_i))$, where $\ell(s_i) \in 2^{\AP}$ is the label at $s_i$,
    
    \item the transition is along the shortest path, $s_j \in \arg\min_{s} d(s_i, s)$, where $d: \calV \times \calV \rightarrow \bbR$ is the shortest feasible path between two scene graph nodes.
\end{enumerate}
The transition cost is defined as $c_k(x_i,x_j) = d(s_i, s_j)$.

\begin{figure}
    \centering
    \includegraphics[width=0.9\linewidth,trim=0mm 2mm 0mm 2mm,clip]{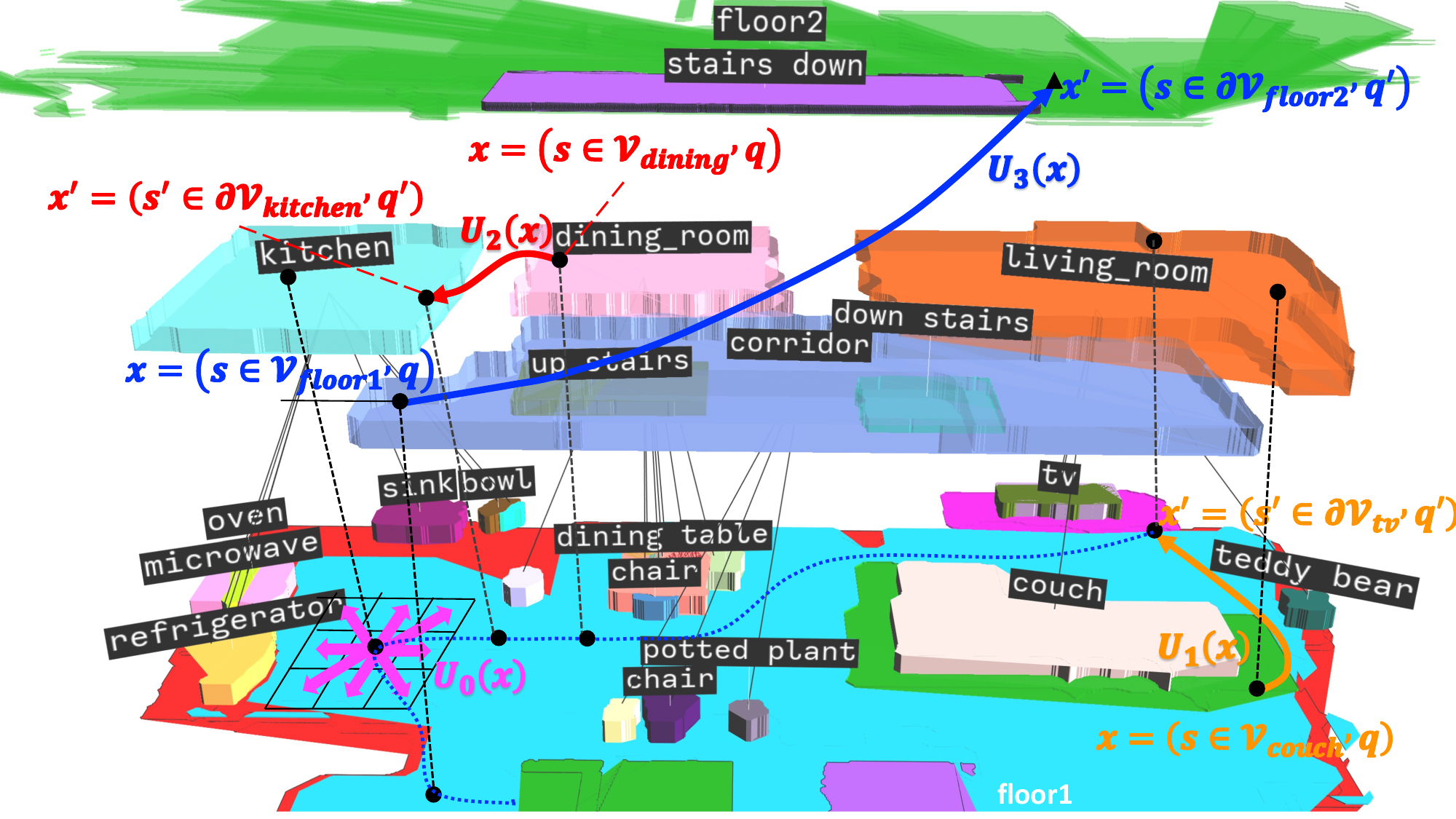}
    \caption{Four-level hierarchical planning domain for \emph{Benevolence}.}
    \label{fig:amra_structure}
\end{figure}

Since $\calX_k \subseteq \calV \times \calQ$, we can define the AMRA* anchor level as $\calX_0 = \calV \times \calQ$ with actions $\calU_0$ derived from the scene graph edges $\calE$, automaton transitions, and $\cup_{k > 0} \calU_k$. The initial state and goal region are defined as $x = (s_1,q_1)$ and $\calR = \calV \times \calF$.

The hierarchical planning domain is illustrated in Fig.~\ref{fig:amra_structure}. Four levels, occupancy ($\calV$), objects ($\calA_1$), rooms ($\calA_2$) and floors ($\calA_3$), are used in our experiments. For example, in the object level, the agent can take an action to move directly from the couch to the TV with transition cost computed as the shortest path in the occupancy level.

\subsection{LTL Heuristic} \label{sec:ltl_heuristics}

To ensure optimal AMRA* planning, a consistent heuristic is required at the anchor level. Inspired by the admissible but inconsistent heuristic in \cite{fu2016optimal}, we design a consistent LTL heuristic function $h_{\textsc{LTL}}$ that approximates the scene graph distance to $\calR = \calV \times \calF$ using information from the automaton $\calM_{\phi_{\mu}}$ and the scene graph attributes. 
We require that the scene graph cost $c$ satisfies the triangle inequality, i.e., $c(s, s') \le c(s, s'') + c(s'', s')$ for any $s, s', s'' \in \calV$. Then, we define the cost between two labels $l_1, l_2 \in 2^{\AP}$ as
\begin{equation}
    c_\ell(l_1, l_2) = \min_{s_1, s_2: \ell(s_1)=l_1, \ell(s_2)=l_2} c(s_1, s_2),
\end{equation}
which is a lower bound on the transition cost from $l_1$ to $l_2$ in the metric space of cost function $c$. Next, we define a lower bound on the transition cost from automaton state $q \in \calQ$ with label $l \in 2^{\AP}$ to an accepting state $q_f \in \calF$ as:
\begin{equation}\label{eq:automaton_bellman}
    g(l, q) = \min_{l' \in 2^{\AP}} c_{\ell}(l, l') + g(l', T(q, l')).
\end{equation}
The function $g: 2^{\AP} \times \calQ \mapsto \bbR_{\ge0}$ can be pre-computed via Dijkstra's algorithm on the automaton $\calM_{\phi_{\mu}}$. We also define a next labeling function $\ell_n: 2^{\AP} \times \calQ \mapsto 2^{\AP}$ that tracks the least-cost label sequence returned by Dijkstra's algorithm with $g(l, q) = 0$, $\ell_n(l, q) = \mathsf{true}$, $\forall q \in \calF$.

\begin{proposition}
    The heuristic function $h_{\textsc{LTL}}: \calV \times \calQ \rightarrow \bbR$ defined below is consistent:
    \begin{equation} \label{eq:ltl_heuristics}
    \begin{aligned}
        h_{\textsc{LTL}}(s, q) &= \min_{t \in \calV}\left[ c(s, t) + g(l(t), T(q, l(t))) \right], \\
        h_{\textsc{LTL}}(s, q) &= 0,\quad \forall q \in \calF.
    \end{aligned} 
    \end{equation}
\end{proposition}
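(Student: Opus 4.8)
The plan is to verify the consistency inequality $h_{\textsc{LTL}}(s,q) \le c(x,x') + h_{\textsc{LTL}}(s',q')$ directly for every anchor-level edge $x=(s,q)\to x'=(s',q')$, recalling from the domain construction that such an edge satisfies $q' = T(q,\ell(s))$ and carries cost $c(x,x')=c(s,s')$ (the shortcut edges inherited from the higher levels only increase the cost, so the base cost is the binding case). Throughout I use that $c$ arises as a shortest-path cost, hence $c(s,s)=0$, together with the directed triangle inequality assumed in the statement. The case $q\in\calF$ is immediate, since then $h_{\textsc{LTL}}(s,q)=0$ while the right-hand side is nonnegative, so I focus on $q\notin\calF$.

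First I would isolate two elementary bounds. (i) Since $g$ is the minimum in the defining equation \eqref{eq:automaton_bellman}, fixing any $l'\in 2^{\AP}$ gives the one-step relaxation $g(l,q)\le c_\ell(l,l') + g(l',T(q,l'))$. (ii) Since $c_\ell$ is a minimum over all node pairs realizing the two labels, the particular pair $(s_1,s_2)$ is feasible, so $c_\ell(\ell(s_1),\ell(s_2))\le c(s_1,s_2)$ for all $s_1,s_2\in\calV$.

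The crux is the following substitution: evaluating the minimum defining $h_{\textsc{LTL}}(s,q)$ at the particular choice $t=s$ and using $c(s,s)=0$ yields
\[
h_{\textsc{LTL}}(s,q) \le g\big(\ell(s),\, T(q,\ell(s))\big) = g(\ell(s), q').
\]
This single step accomplishes two things at once: it discharges the spatial cost at $s$, and, crucially, it advances the automaton argument from $q$ to $q'=T(q,\ell(s))$, which is exactly the automaton state that appears inside $h_{\textsc{LTL}}(s',q')$. A naive attempt to reuse the minimizer of $h_{\textsc{LTL}}(s',q')$ directly inside $h_{\textsc{LTL}}(s,q)$ fails precisely here, because it would leave the automaton at $T(q,\cdot)$ rather than $T(q',\cdot)$, and $g$ need not be monotone in its automaton argument.

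To finish, let $t^\star$ attain the minimum in $h_{\textsc{LTL}}(s',q') = c(s',t^\star) + g(\ell(t^\star), T(q',\ell(t^\star)))$. Applying (i) to $g(\ell(s),q')$ with $l'=\ell(t^\star)$, then (ii), then the triangle inequality $c(s,t^\star)\le c(s,s')+c(s',t^\star)$, I obtain
\[
\begin{aligned}
h_{\textsc{LTL}}(s,q) &\le g(\ell(s),q') \le c_\ell(\ell(s),\ell(t^\star)) + g(\ell(t^\star), T(q',\ell(t^\star)))\\
&\le c(s,s') + \big[\, c(s',t^\star) + g(\ell(t^\star), T(q',\ell(t^\star))) \,\big],
\end{aligned}
\]
and the bracketed term is exactly $h_{\textsc{LTL}}(s',q')$, establishing consistency. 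The only remaining book-keeping is the accepting-state corner $q'\in\calF$: there the override $h_{\textsc{LTL}}(s',q')=0$ stays consistent because $g(\ell(s),q')=0$, so $h_{\textsc{LTL}}(s,q)\le 0\le c(s,s')$. I expect the main obstacle to be exactly this automaton book-keeping --- keeping straight which of $q$, $q'=T(q,\ell(s))$, and $T(q',\ell(t^\star))$ each term carries, and handling the $\calF$ overrides --- rather than any analytic difficulty; once the $t=s$ substitution aligns the automaton states, the claim collapses to the triangle inequalities for $c$ and $c_\ell$ and the one-step relaxation of the Bellman recursion for $g$.
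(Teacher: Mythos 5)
Your proof is correct, and it takes a genuinely different route from the paper's. The paper's proof splits into two cases according to whether the automaton state changes along the edge: when $T(q_x,\ell(s_y)) = q_y = q_x$ it uses the triangle inequality alone, reusing the minimizer of $h_{\textsc{LTL}}(s_y,q_y)$ directly inside $h_{\textsc{LTL}}(s_x,q_x)$; only when $q_y \neq q_x$ does it invoke the label-cost bound $c_\ell \le c$ and the Bellman equation for $g$, and there it works downward from $c(s_x,s_y)+h_{\textsc{LTL}}(s_y,q_y)$ (lower-bounding $c(s_x,s_y)$ by $\min_{t:\ell(t)=\ell(s_y)}c(s_x,t)$ and collapsing the inner minimum to $g(\ell(s_y),q_y)$) until $h_{\textsc{LTL}}(s_x,q_x)$ reappears. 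Your $t=s$ substitution, which gives $h_{\textsc{LTL}}(s,q)\le g(\ell(s),q')$ and aligns the automaton argument with the successor state, replaces that case analysis with a single uniform chain: one-step Bellman relaxation at $\ell(t^\star)$, then $c_\ell\le c$, then the triangle inequality; the same chain covers $q'=q$ and $q'\neq q$, and it also forces you to handle the accepting-state overrides ($q\in\calF$ and $q'\in\calF$) explicitly, which the paper's proof passes over silently. The trade-off is that your substitution needs $c(s,s)=0$, an assumption the paper never states (it assumes only positivity and the triangle inequality); it is harmless because $c$ is meant as a shortest-path cost, as you note, but strictly speaking the paper's two-case argument gets by without it. In short: same three ingredients (triangle inequality, $c_\ell$ as a label-level lower bound, the Bellman recursion for $g$), but a cleaner, uniform decomposition on your side at the price of one extra, mild hypothesis.
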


\begin{proof}
    Consider a state $x = (s_x, q_x)$ with label $l_x=l(s_x)$. For any $y = (s_y, q_y)$ that $s_y$ is reachable from $s_x$ in one step, we have two cases to handle. When $T(q_x, l(s_y)) = q_y = q_x$:
    \begin{equation}
    \small
    \begin{aligned} \notag
        h_{\textsc{LTL}}&(s_x, q_x) 
            \le \min_{t \in \calV}\left[ c(s_x, s_y) + c(s_y, t) + g(l(t), T(q_x, l(t))) \right] \\
            &= c(s_x, s_y) + \min_{t \in \calV}\left[ c(s_y, t) + g(l(t), T(q_y, l(t))) \right] \\
            &= c(s_x, s_y) + h_{\textsc{LTL}}(s_y, q_y).
    \end{aligned}
    \end{equation}
    When $T(q_x, l_y) = q_y \neq q_x$, with $l_y=l(s_y)$, we have:
    \begin{equation}
    \small
    \begin{aligned} \notag
        &c(s_x, s_y) + h_{\textsc{LTL}}(s_y, q_y) \\
        &= c(s_x, s_y) + \min_{t \in \calV}\left[ c(s_y, t) + g(l(t), T(q_y, l(t))) \right] \\
        &\ge \min_{t\in\calV, l(t)=l_y} c(s_x, t) + \min_{t \in \calV}\left[ c(s_y, t) + g(l(t), T(q_y, l(t))) \right] \\
        &\ge \min_{t\in\calV, l(t)=l_y} c(s_x, t) + \min_{l'\in 2^{\AP}}\left[ c_l(l_y, l') + g(l', T(q_y, l')) \right] \\
        &= \min_{t\in\calV, l(t)=l_y} [c(s_x, t)] + g(l_y, q_y) \\
        &\ge\min_{t\in\calV} \left[c(s_x, t) + g(l(t), T(q_x, l(t)))\right] = h_{\textsc{LTL}}(s_x, q_x).\qedhere
    \end{aligned}
    \end{equation}
\end{proof}

\subsection{LLM Heuristic} \label{sec:llm_heuristics}
In this section, we seek to assist AMRA* by developing an LLM heuristic $h_{\textsc{LLM}}: \calV \times \calQ \rightarrow \bbR$ that captures the hierarchical semantic information of the scene graph. The LLM heuristic uses all attributes at a node $s\in \calV$, the current automaton state $q \in \calQ$, and the attribute hierarchy $\Bar{\calG}$, and returns an attribute-based guide that helps the AMRA$^*$ to search in the right direction for an optimal path.
We design the prompt to ask for LLM attribute-based guidance with $4$ components as follows: 
\begin{itemize}
    \item environment description from its attribute hierarchy $\Bar{\calG}$,
    \item list of motions $M = \{m_i(\cdot, \cdot)\}$, where $m_i(a_j, a_k)$, $a_j\in\calA_j$, $a_k\in\calA_k$ describes movements on $\Bar{\calG}$ from attribute $a_i$ to $a_j$ that the LLM model uses to generate its guides,
    \item an example of the mission $\mu_{unique}$ and how to response,
    \item description of the mission $\mu_{unique}$, current attributes, remaining task given the automaton state $q \in \calQ$, and request for guidance on how to finish the task.
\end{itemize}

The LLM model returns a sequence of function calls $\{f_i(a_j, a_k)\}_{i = 0}^N$, $f_i \in M$, $a_j\in\calA_j$, $a_k\in\calA_k$ in XML format, easing response parsing \cite{vemprala2023chatgpt}. Each function call returns a user-defined cost, e.g., Euclidean distance between attributes: $f_i(a_j, a_k) = c (s_j, s_k)$, where $s_j, s_k$ are the center of $\calV_{a_j}$ and $\calV_{a_k}$, respectively. The total cost of the LLM functions is used as an LLM heuristic $h_{LLM} (s,q) = \sum_{i = 0}^N f_i(a_j, a_k)$. Due to the LLM query delay and its limited query rates, the sequence of function calls suggested by the LLM model is obtained offline stored and used to calculate the heuristic $h_{LLM}$ online in AMRA* based on the user-defined cost.  

\begin{figure}[t]
    \centering
    \includegraphics[width=0.95\linewidth,%
                           trim=0mm 100mm 0mm 0mm,clip]{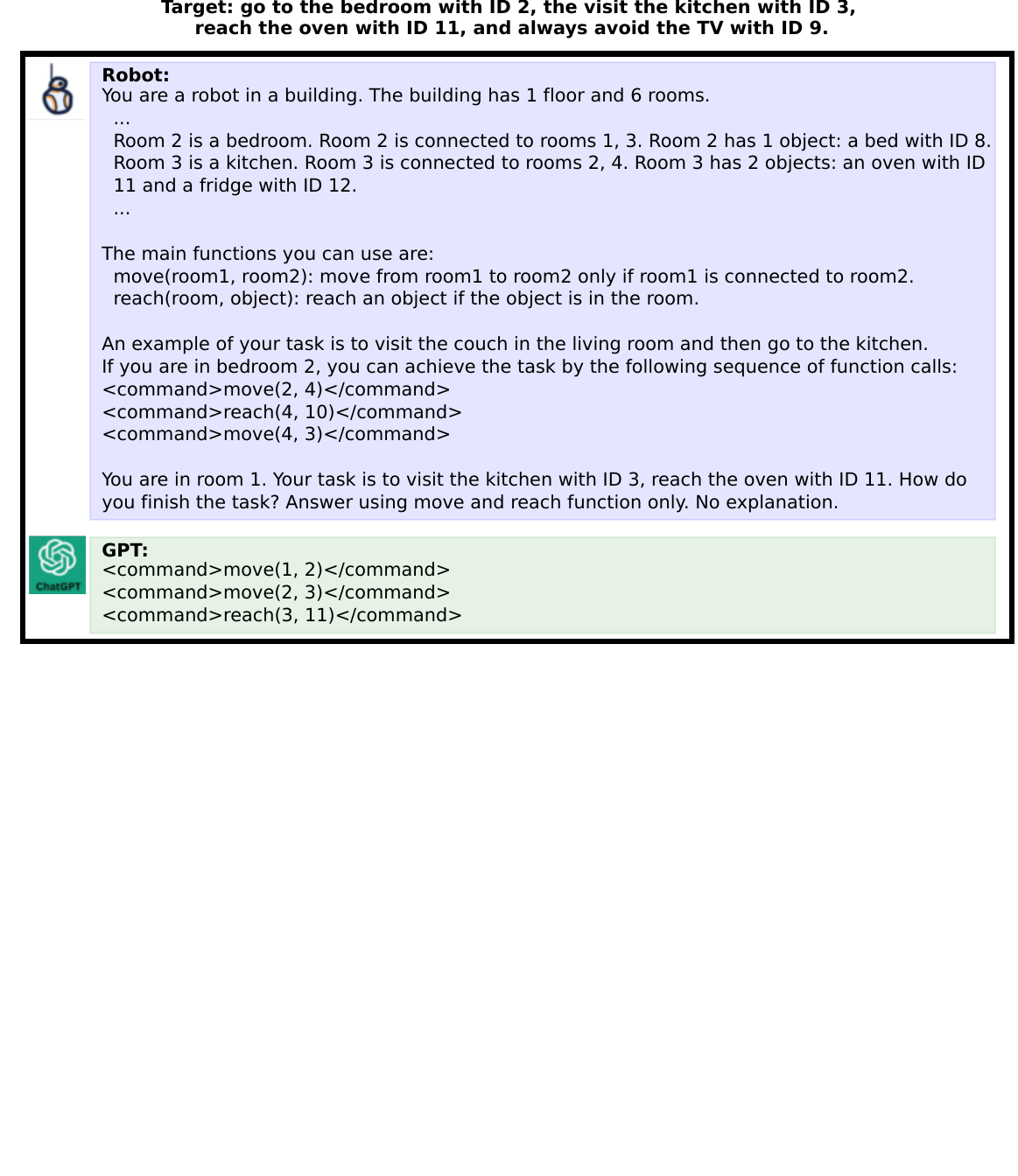}
    \caption{ChatGPT prompt requesting a scene graph path.}
    \label{fig:llm_heuristic_prompt}
\end{figure}

\begin{figure}[t]
    \centering
    \includegraphics[width=0.85\linewidth,,trim=0mm 5mm 0mm 5mm,clip]{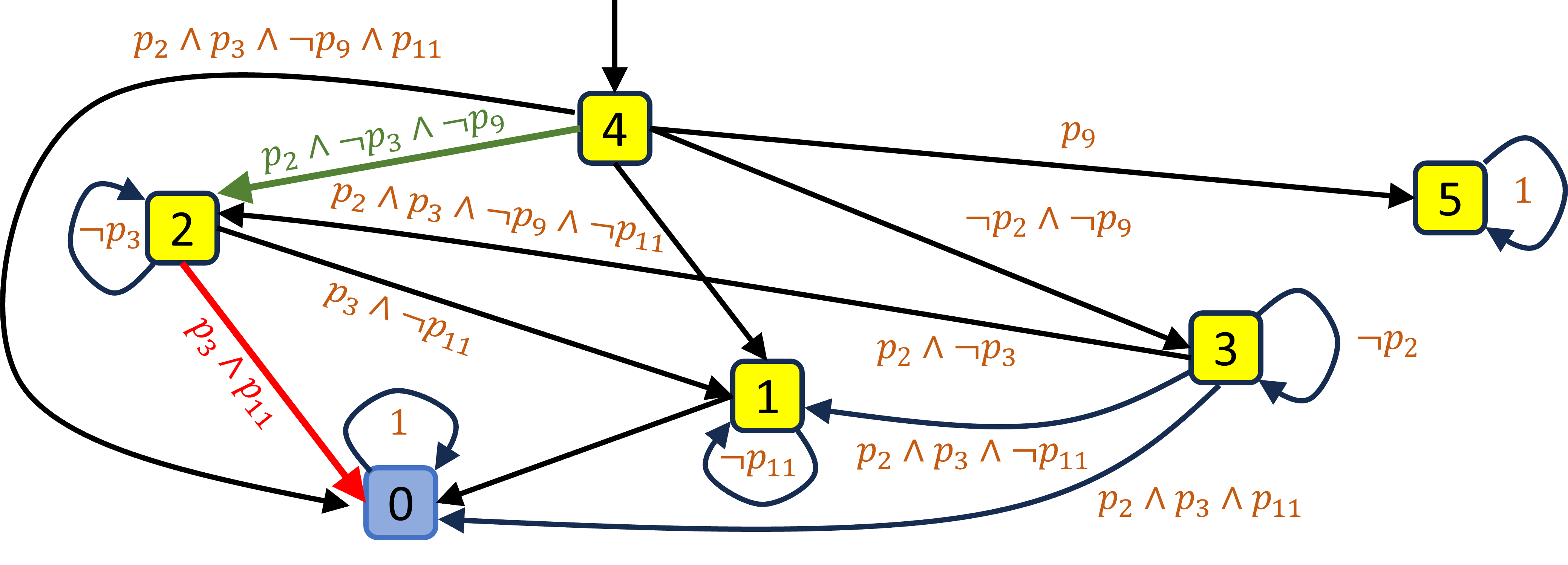}
    \caption{The automaton graph $T$ for the mission \textit{"go to the bedroom 2, then visit the kitchen 3, reach the oven 11, and always avoid the TV 9"} with an initial node $q_1 = 4$ and an accepting node $0$.}
    \label{fig:ltl_automaton_graph_next_task}
\end{figure}

Fig. \ref{fig:llm_heuristic_prompt} illustrates a sample prompt and response. The prompt first describes how the attributes are connected based on the attribute hierarchy $\Bar{\calG}$. Each attribute is mentioned with a unique ID to avoid confusing the LLM, as shown in the first paragraph of the prompt in Fig. \ref{fig:llm_heuristic_prompt}.
The second part of the prompt provides a list of possible functions used to guide the agent, such as $move(1,2)$ to move from room $1$ to room $2$, or $reach(1,3)$ to reach an object $3$ in room $1$. The third component provides an example of a mission and how the LLM should response. The last component describes the current attributes and the remaining mission, generated based on the current automaton state $q$, and requests LLM to generate a high-level plan using the provided functions.

The automaton state $q$ represents how far we have achieved the mission. Thus, to describe the remaining mission, we run Dijkstra's algorithm on the automaton $\calT$ to find the shortest path from $q$ to an accepting state in $\calF$. We obtain a set of atomic prepositions evaluated \textit{true} along the path, and concatenate their descriptions to describe the remaining mission in the prompt. For example, the desired mission is to \emph{``go to the bedroom 2, then visit the kitchen 3, reach the oven 11, while always avoid the TV 9"}. Let the atomic prepositions be defined as $p_2$, $p_3$, $p_{11}$, and $p_9$, where the indices correspond to the ID of the room or object.

The task can be described using an LTL as follows: $\phi = \bfF (p_2 \wedge \bfF (p_3 \wedge \bfF p_{11})) \wedge \neg p_9,$
whose automaton graph $T$ generated from Spot~\cite{spot} is shown in Fig. \ref{fig:ltl_automaton_graph_next_task} with the initial state $q_1 = 4$ and the final states $\calF = \{0\}$. 
The agent is currently in room 1 and have been already visited room 2, i.e. $q = 2$ on $T$. The shortest path from $q$ to the accepting state $0$ is marked by red arrows in Fig. \ref{fig:ltl_automaton_graph_next_task}. Along this path, $p_3$ and $p_{11}$ turn $\mathsf{true}$, causing the remaining mission to be to\emph{``visit the kitchen 3 and reach the oven 11"} (Fig. \ref{fig:llm_heuristic_prompt}). The atomic preposition $p_4$ leads to a sink state $5$ if it evaluates $\mathsf{true}$, and never appears in the next mission, leading to an optimistic LLM heuristic.

\begin{table}[t]
    \caption{Example mission descriptions for each scene.}
    \label{tab:example_mission_descriptions}
    \centering
    \begin{tabular}{r|l}
        \hline
         \emph{Allensville} & \pbox{6cm}{Clean all vases in the dining room. Eventually water the potted plants in the bathrooms one after another.} \\ \hline
         \emph{Benevolence} & \pbox{6cm}{Visit the bathroom on floor 0 and avoid the sink, then go to the dining room and sit on a chair. Always avoid the living room and the staircase next to it.} \\ \hline
         \emph{Collierville} & \pbox{6cm}{Clean all the corridors, except the one on floor 0.} \\ \hline
    \end{tabular}
\end{table}

\begin{figure}[t]
    \centering
    \includegraphics[width=\linewidth,,trim=0mm 13mm 0mm 13mm,clip]{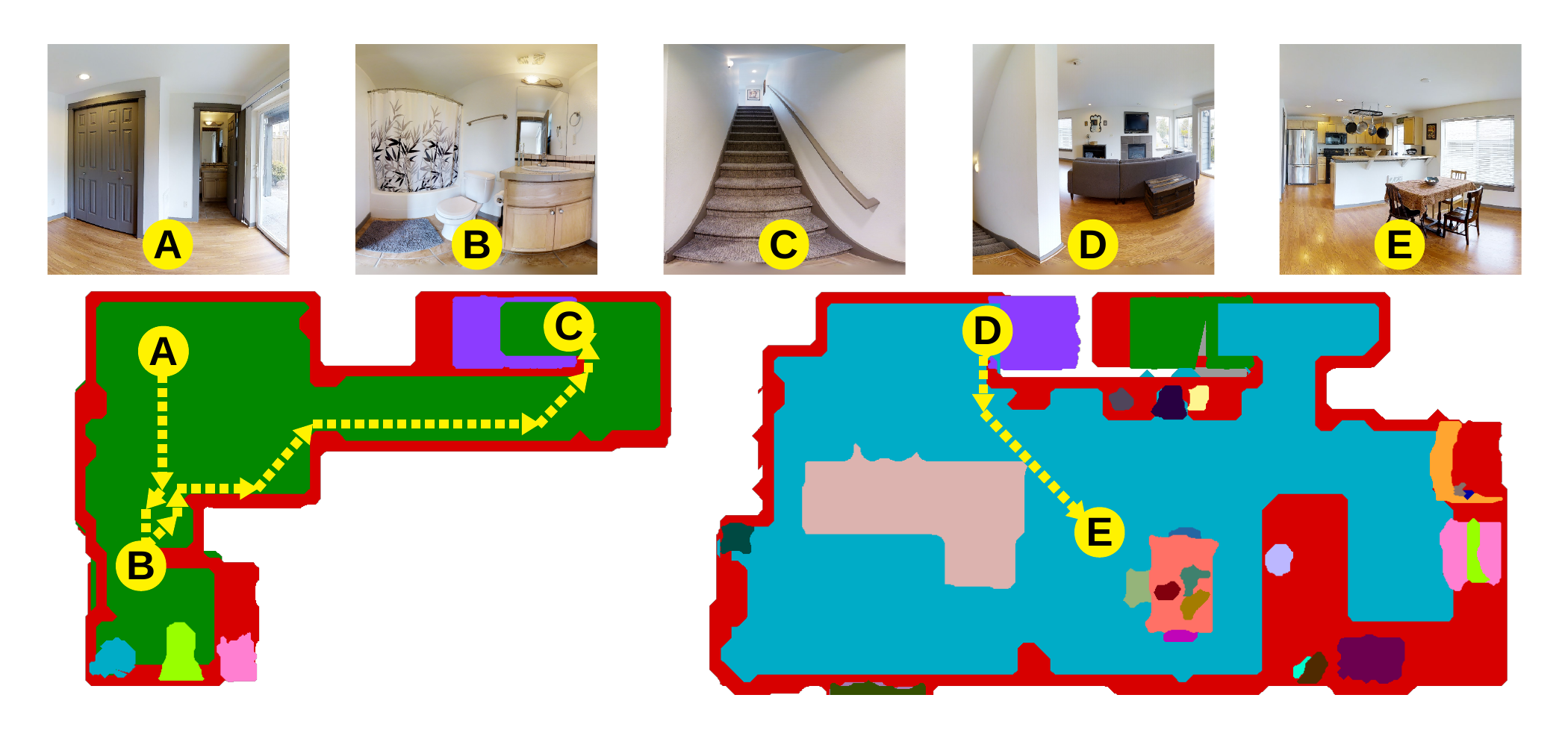}
    \caption{Optimal path for the \emph{Benevolence} mission shown in Table~\ref{tab:example_mission_descriptions}.}
    \label{fig:example_path}
\end{figure}

\begin{figure}[t]
    \centering
    \includegraphics[width=0.48\linewidth,trim=0mm 6mm 0mm 2mm,clip]{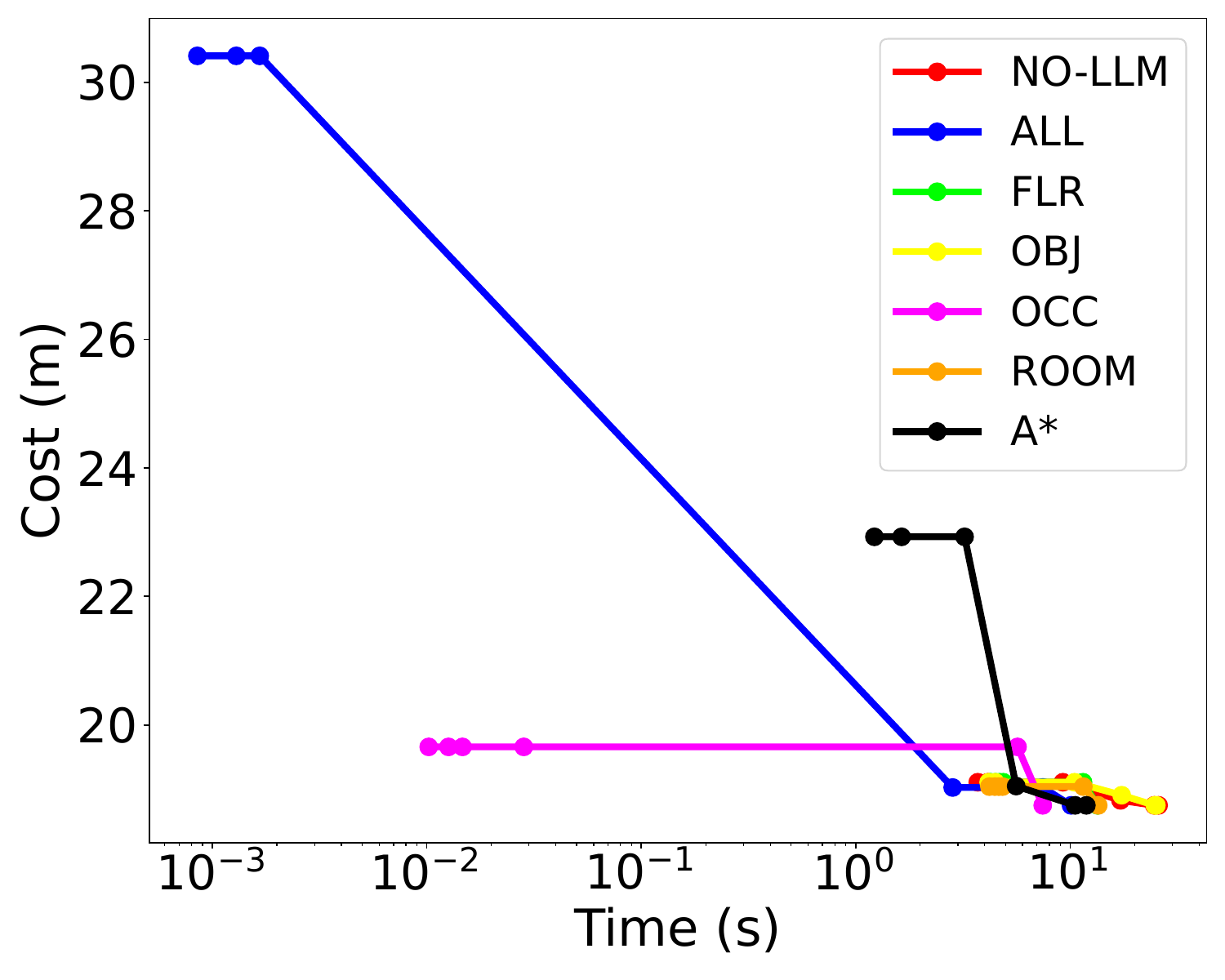}%
    \hfill%
    \includegraphics[width=0.48\linewidth,trim=0mm 6mm 0mm 2mm,clip]{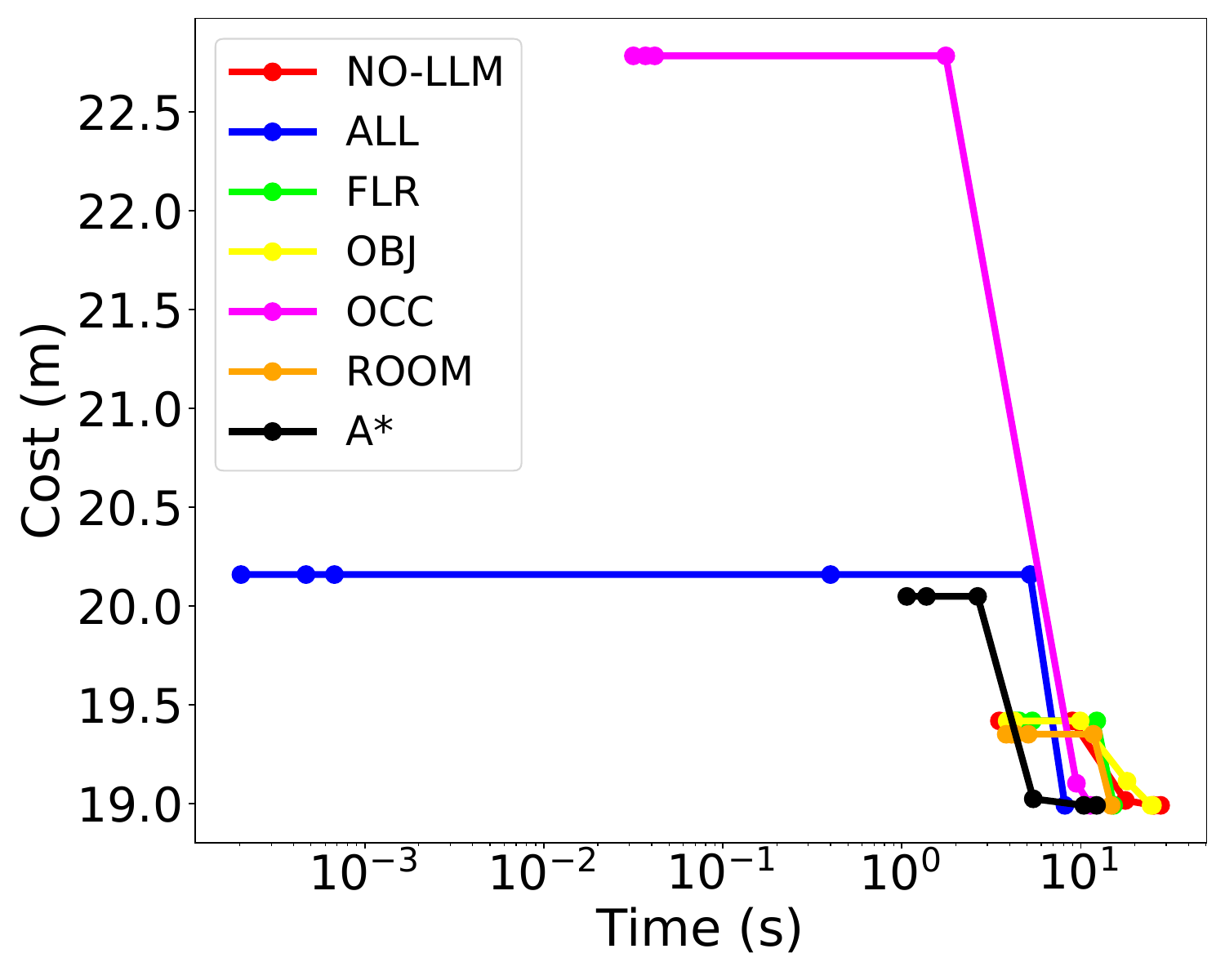}
    \caption{Path cost vs planning time for different AMRA* variants.}
    \label{fig:cost_vs_time}
    \includegraphics[width=0.48\linewidth,trim=0mm 2mm 0mm 2mm,clip]{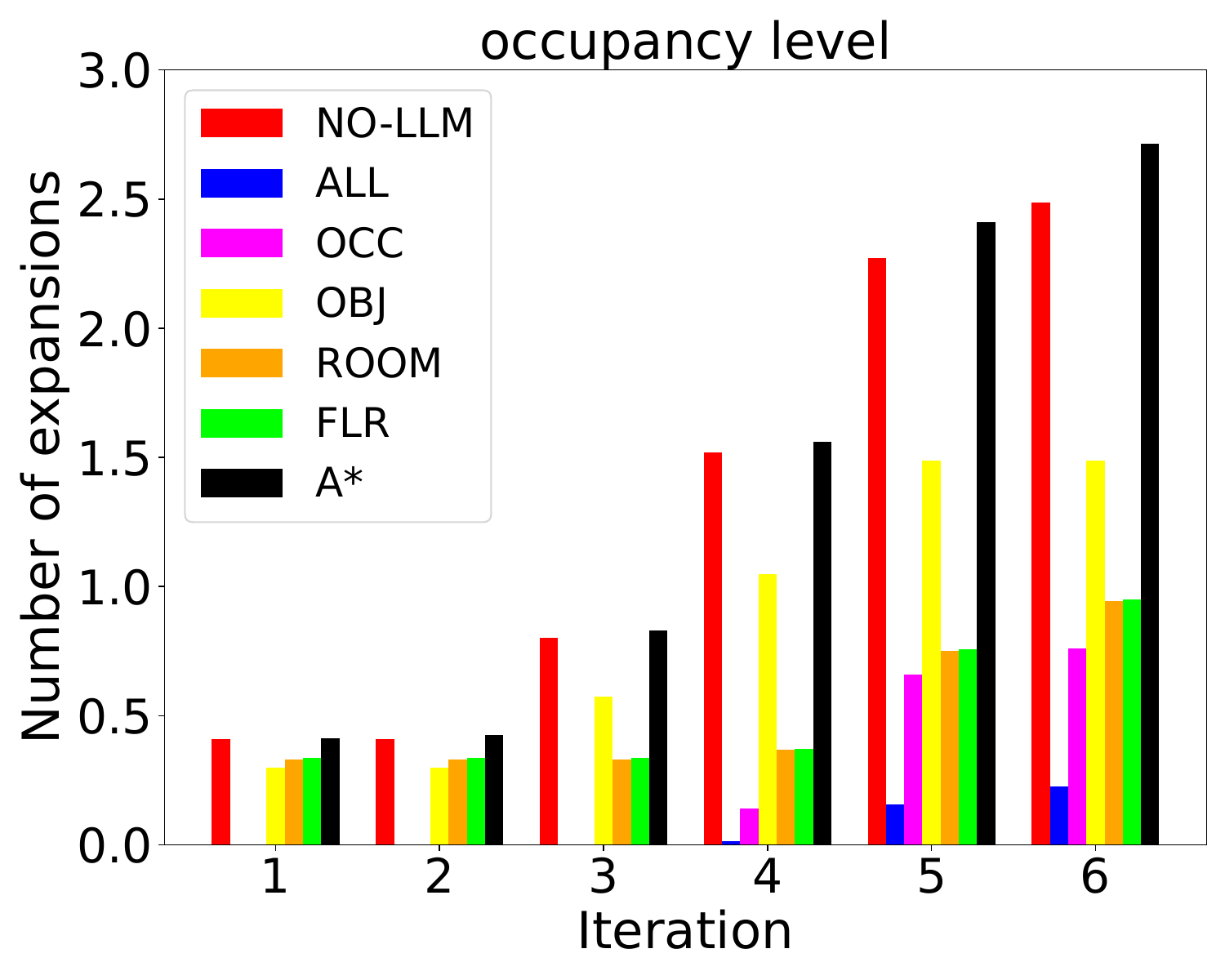}%
    \hfill%
    \includegraphics[width=0.48\linewidth,trim=0mm 2mm 0mm 2mm,clip]{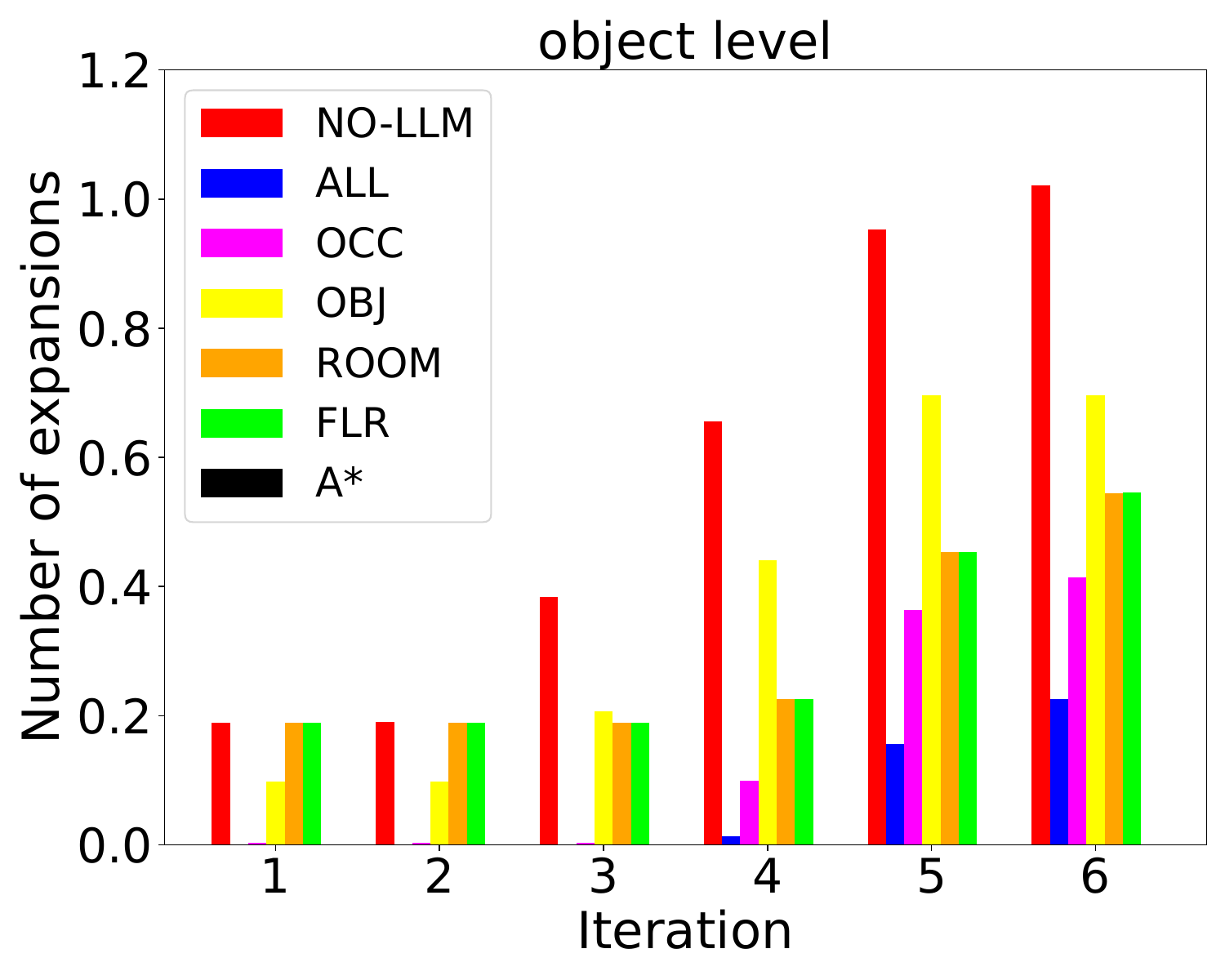}\\
    \includegraphics[width=0.48\linewidth,trim=0mm 6mm 0mm 2mm,clip]{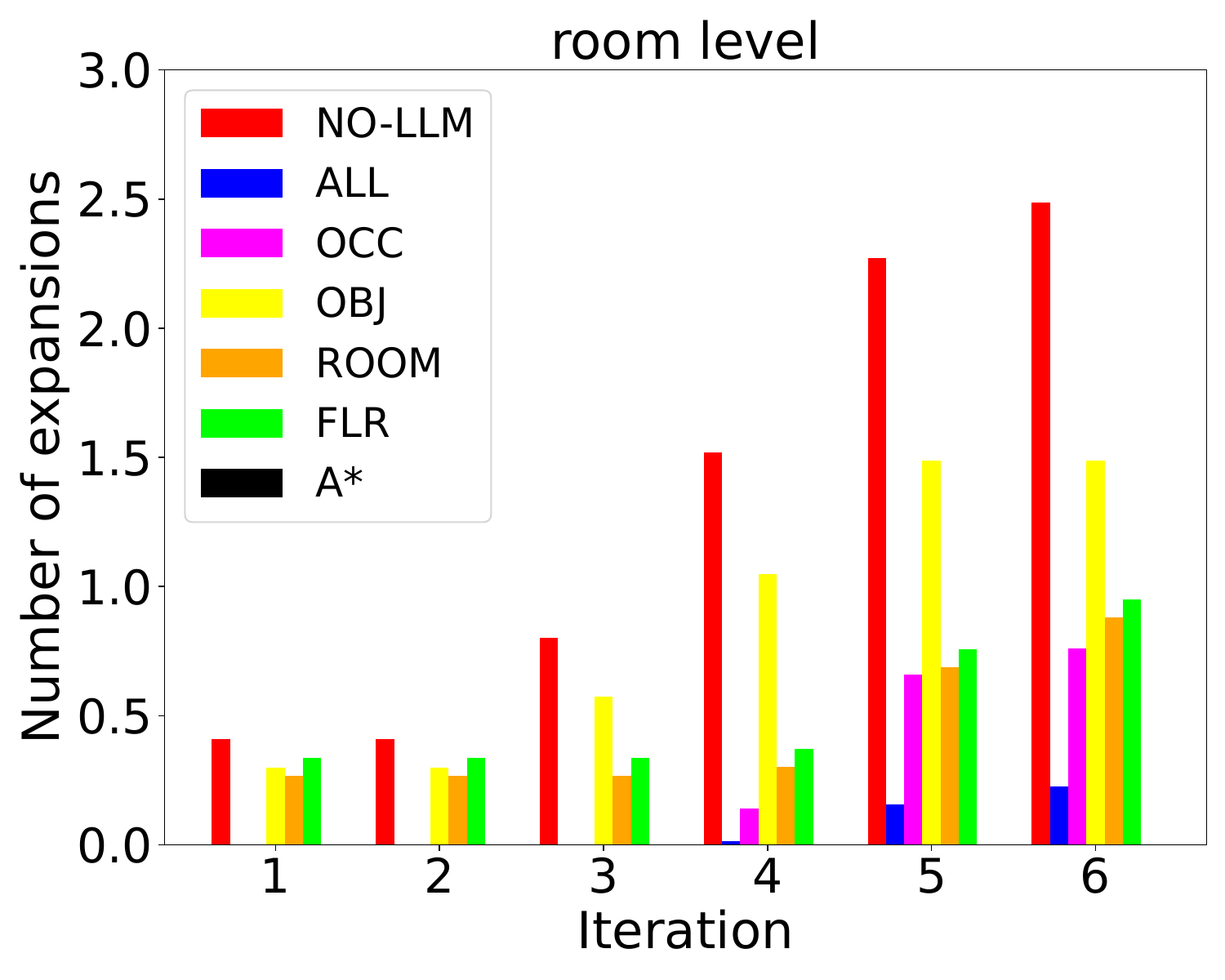}%
    \hfill%
    \includegraphics[width=0.48\linewidth,trim=0mm 6mm 0mm 2mm,clip]{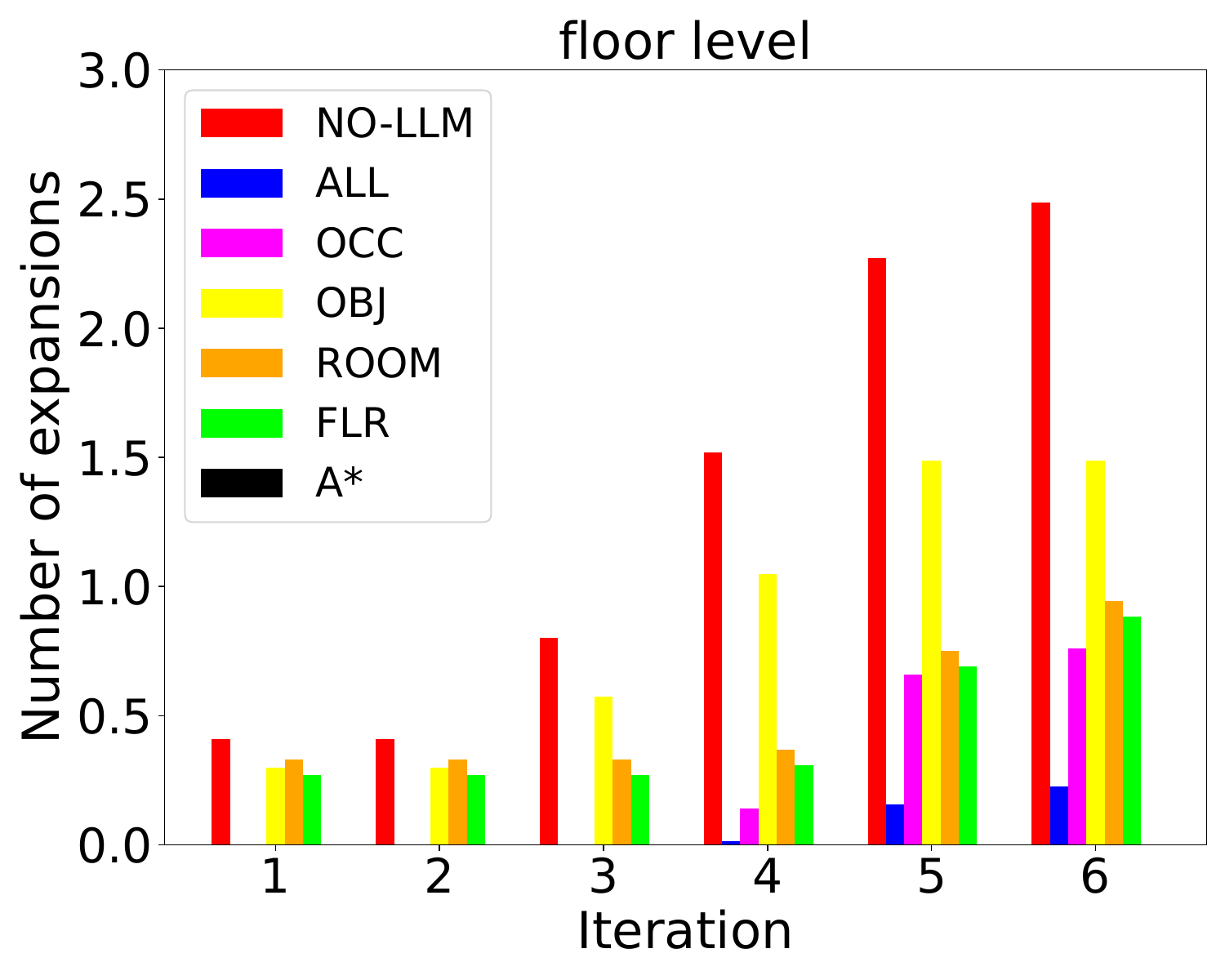}
    \caption{Number of node expansions v.s. the planning iteration. Each sub-figure presents a hierarchy level.}
    \label{fig:expansion_vs_iteration}
\end{figure}

\begin{table}[t]
    \caption{First feasible path computation time, relative cost between first and optimal path, and optimal path computation time averaged over $5$ initial conditions for mission 1 in \emph{Benevolence}.} 
    \label{tab:comp_all_setups}
    \centering
    \resizebox{0.95\linewidth}{!}{
    \begin{tabular}{r|c|c|c}
                 & 1st iter. time (sec.) & 1st iter. $cost / cost_{opt}$ & opt. time (sec.) \\ \hline
        ALL  & \textbf{0.0007} & 1.3763 & 8.9062  \\
        OCC  & 0.0244 & 1.0827 & \textbf{8.3144}  \\
        OBJ  & 3.7460 & 1.0387 & 24.936 \\
        ROOM & 3.6878 & \textbf{1.0306} & 13.1352 \\
        FLR  & 3.8106 & 1.0369 & 13.3287 \\
     NO-LLM  & 3.3260 & 1.0318 & 24.2516 \\
         A*  & 1.1202 & 1.0997 & 11.7594 \\
    \end{tabular}}
    \caption{First path computation time, relative cost between first and optimal path, and optimal path computation time averaged over each scene's 5 missions and 5 initial positions/mission.}
    \label{tab:comp_cross_scenes}
    \centering
    \resizebox{0.98\linewidth}{!}{
    \begin{tabular}{r|c|c|c}
                & \textit{Allensville}(1-floor) & \textit{Benevolence}(3-floor) & \textit{Collierville}(3-floor) \\ \hline
            ALL &  0.24/1.69/3.50 & \textbf{0.32}/1.24/\textbf{11.82} & \textbf{0.009}/1.26/\textbf{5.48} \\
        NO-LLM  &  0.36/1.34/6.33 & 2.82/1.18/21.57 & 1.16/1.38/8.13 \\
            A*  &  \textbf{0.15/1.08/3.23} & 1.33/\textbf{1.16}/14.22 & 1.19/\textbf{1.06}/7.04 \\
    \end{tabular}}
\end{table}

\section{EVALUATION}
\label{sec:evaluation}

To evaluate our method, we use \emph{Allensville} (1-floor), \emph{Benevolence} (3-floor) and \emph{Collierville} (3-floor) from the 3D Scene Graph dataset \cite{armeni20193d}. For each scene, we designed 5 missions (some are shown in Table~\ref{tab:example_mission_descriptions}). For each mission, we used 5 initial positions across different floors and rooms.

We use GPT-4 \cite{openai2023gpt4} to translate missions to LTL formulas, and Spot~\cite{spot} for LTL formulas to automata as described in Sec.~\ref{sec:nl_to_ltl}. Following Sec.~\ref{sec:llm_heuristics}, we use GPT-4 \cite{openai2023gpt4} to generate the LLM heuristic function $h_{\textsc{LLM}}$. Given a scene graph $\calG$, the mission described by the automaton $\calM_\phi$, the LTL heuristic $h_{\textsc{LTL}}$, and the LLM heuristic $h_{\textsc{LLM}}$, we construct the hierarchical planning domain and run AMRA*.

Fig.~\ref{fig:example_path} shows a path in \emph{Benevolence} for the mission in Table \ref{tab:example_mission_descriptions}. Starting from the empty room on floor 0, the agent goes to the bathroom entrance without approaching the sink, and then proceeds upstairs to floor 1, finally reaching a chair in the dining room without entering the living room. 

Since planning domains with different hierarchy levels and different heuristics per level can be constructed, we compare different setups to investigate the benefit of using our LLM heuristic. With all hierarchy levels having an LTL heuristic, we design 7 setups: occupancy level only without LLM heuristic (A*), all levels available but without LLM heuristics (NO-LLM), all levels with LLM heuristics (ALL), and one of the levels with LLM heuristic (OCC, OBJ, ROOM, FLR). Fig.~\ref{fig:cost_vs_time} shows that the ALL setup manages to return a feasible path much faster than others thanks to the LLM guidance across all hierarchical levels, while it also approaches the optimal solution within similar time spans.

As an anytime algorithm, AMRA* starts searching with large weight on the heuristics, then reuses the results with smaller heuristic weights. As the planning iterations increase, the path gets improved. When the heuristic weight decays to 1, we obtain an optimal path. To further investigate the benefits of using LLM heuristics, we compare the number of node expansions per planning iteration. As shown in Fig.~\ref{fig:expansion_vs_iteration}, applying our LLM heuristic to any hierarchy level reduces the node expansions significantly, which indicates that our LLM heuristic produces insightful guidance to accelerate AMRA*. An exciting observation is that the more hierarchy levels we use our LLM heuristic in, the more efficient the algorithm is. This encourages future research to exploit scene semantic information further to accelerate planning.

AMRA* allows a robot to start executing the first feasible path, while the path optimization proceeds in the background. Tables \ref{tab:comp_all_setups} and \ref{tab:comp_cross_scenes} shows the time required to compute the first path, the path cost relative to the optimal path, and the time required to find an optimal path. The ALL configuration outperforms other setups in speed of finding the first path and the optimal path when the scene gets more complicated.

\section{CONCLUSION}
\label{sec:conclusion}

We demonstrated that an LLM can provide symbolic grounding, LTL translation, and semantic guidance from natural language missions in scene graphs. This information allowed us to construct a hierarchical planning domain and achieve efficient planning with LLM heuristic guidance. We managed to ensure optimality via multi-heuristic planning, including a consistent LTL heuristic. Our experiments show that the LLM guidance is useful at all levels of the hierarchy for accelerating feasible path generation.






\balance
{\small
\bibliographystyle{cls/IEEEtran}
\bibliography{bib/main.bib}
}

\end{document}